\def\eqref#1{equation~\ref{#1}}
\def\1{\bm{1}}
\def\vw{{\bm{w}}}
\DeclareMathAlphabet{\mathsfit}{\encodingdefault}{\sfdefault}{m}{sl}
\SetMathAlphabet{\mathsfit}{bold}{\encodingdefault}{\sfdefault}{bx}{n}
\newcommand{\E}{\mathbb{E}}
\newcommand{\R}{\mathbb{R}}
\DeclareMathOperator*{\argmax}{arg\,max}
\DeclareMathOperator*{\argmin}{arg\,min}
\newtheorem{theorem}{Theorem}[section]
\newcommand\cut[1]{}
\newcolumntype{C}[1]{>{\centering\arraybackslash}m{#1}}
\newcolumntype{R}[1]{>{\raggedleft\arraybackslash}m{#1}}
\newtheorem{thm}{Theorem}[section]
\newcommand{\be}{\begin{equation}}
\newcommand{\ee}{\end{equation}}
\newcommand{\bea}{\begin{eqnarray}}
\newcommand{\eea}{\end{eqnarray}}
\newcommand{\beaa}{\begin{eqnarray*}}
\newcommand{\eeaa}{\end{eqnarray*}}
\DeclareMathAlphabet{\mathpzc}{OT1}{pzc}{m}{n}
\newcommand{\mycomment}[3]{{\textcolor{#3}{[#1 #2]}}}
\newcommand{\ihmarker}{{\textcolor{blue}{\ensuremath{^{\textsc{I}}_{\textsc{H}}}}}}
 \newcommand{\cg}[1]{\mycomment{\drmarker}{#1}{red}}
 \newcommand{\ih}[1]{\mycomment{\ihmarker}{#1}{blue}}
 \newcommand{\cg}[1]{}
 \newcommand{\ih}[1]{} 
\newcommand{\andre}[1]{\textcolor{blue} {AB: #1}}
\newtheorem{lemma}[theorem]{Lemma}
\title{Disentangled Cumulants Help Successor \\Representations Transfer to New Tasks}
\author{Christopher~Grimm \\
DeepMind, University of Michigan \\
\texttt{crgrimm@umich.edu} \\
\And
Irina~Higgins \\
DeepMind \\
\texttt{irinah@google.com} \\
\And
Andre~Barreto \\
DeepMind \\
\texttt{andrebarreto@google.com} \\
\And
Denis~Teplyashin \\
DeepMind \\
\texttt{teplyashin@google.com} \\
\And
Markus~Wulfmeier \\
DeepMind \\
\texttt{mwulfmeier@google.com} \\
\And
Tim~Hertweck \\
DeepMind \\
\texttt{thertweck@google.com} \\
\And
Raia~Hadsell \\
DeepMind \\
\texttt{raia@google.com} \\
\And
Satinder~Singh \\
DeepMind \\
\texttt{baveja@google.com} \\
}
\begin{document}

\maketitle

\begin{abstract}
Biological intelligence can learn to solve many diverse tasks in a data efficient manner by re-using basic knowledge and skills from one task to another. Furthermore, many of such skills are acquired without explicit supervision in an intrinsically driven fashion. This is in contrast to the state-of-the-art reinforcement learning agents, which typically start learning each new task from scratch and struggle with knowledge transfer. In this paper we propose a principled way to learn a basis set of policies, which, when recombined through generalised policy improvement, come with guarantees on the coverage of the final task space. In particular, we concentrate on solving goal-based downstream tasks where the execution order of actions is not important. We demonstrate both theoretically and empirically that learning a small number of policies that reach intrinsically specified goal regions in a disentangled latent space can be re-used to quickly achieve a high level of performance on an exponentially larger number of externally specified, often significantly more complex downstream tasks. Our learning pipeline consists of two stages. First, the agent learns to perform intrinsically generated, goal-based tasks in the total absence of environmental rewards. Second, the agent leverages this experience to quickly achieve a high level of performance on numerous diverse externally specified tasks. 
\end{abstract}

\section{Introduction}
Natural intelligence is able to solve many diverse tasks by transferring knowledge and skills from one task to another. For example, by knowing about objects and how to move them in 3D space, it is possible to learn how to sort them by shape or colour faster. However, many of the current state-of-the-art artificial reinforcement learning (RL) agents often struggle with such basic skill transfer. They are able to solve single tasks well, often beyond the ability of any natural intelligence \citep{Silver_etal_2016, Mnih_etal_2015, Jaderberg_etal_2017}, however even small deviations from the task that the agent was trained on can result in catastrophic failures \citep{Lake_etal_2016, rusu2016progressive}. Although improving transfer in RL agents is an active area of research \citep{Higgins_etal_2017, rusu2016progressive, Nair_etal_2018, Barreto_etal_2018, wulfmeier2019regularized, torrey2010transfer,taylor2009transfer, thrun2012learning, caruana1997multitask, Jaderberg_etal_2017, riedmiller2018sac}, most typical deep RL agents start learning every task from scratch. This means that each time they have to re-learn how to perceive the world (the mapping from a high-dimensional observation to state), and also how to act (the mapping from state to action), with the majority of time arguably spent on the former. The optimisation procedure naturally discards information that is irrelevant to the task, which means that the learnt state representation is often unsuitable for new tasks. Biological intelligence appears to operate differently. A lot of knowledge tends to be discovered and learnt without explicit supervision \citep{tolman1948latent, Clark_2013, Friston_2010}. This basic knowledge can then form the behavioural basis that can be used to solve new tasks faster. In this paper we argue that such transferable knowledge and skills should be acquired in artificial agents too. 
In particular, we want to start by building agents that have the ability to discover stable entities that make up the world and to learn basic skills to manipulate these entities. Compositional re-use of such skills enables biological intelligence to find reasonable solutions to many naturally occurring tasks, from goal-directed movement (controlling your own position), to food gathering (controlling the position of fruit and and nuts), or building a simple defence system (re-positioning multiple stones into a fence or digging a ditch). In this paper we concentrate on goal-based natural tasks that can be expressed in natural language, and that do not require a specific execution order of actions.




To this end, we propose a principled way to learn a small set of policies which can be re-used by the agents to quickly produce reasonable performance on an exponentially large set of goal-driven tasks within an environment. We propose a method on how to discover these policies in the absence of external supervision, where the agent accumulates a transferable set of basic skills through intrinsically motivated interactions with the environment. This first stage of free play builds the foundation to later solve many diverse extrinsically specified downstream tasks. We suggest formalising such a two-stage pipeline as the \emph{endogenous reinforcement learning} (ERL) setting, in order to provide a consistent evaluation framework for some of the existing and future work on building RL agents with intrinsic learning signals \citep{gregor2017vic, eysenbach2019diayn, hansen2019visr, Nair_etal_2018, Laversanne-Finot_etal_2018}. We propose a disjoint two step research pipeline, where the agent is allowed unlimited access to the environment in the ERL stage, where no extrinsic rewards are provided and the agent is supposed to learn as much as it can through endogenously (intrinsically) driven interactions with the environment. This is followed by a standard RL stage where the success of the previous step is evaluation in terms of data-efficiency of learning on multiple diverse extrinsically (exogenously) specified downstream tasks in the same environment. We hope that by working in this extreme two stage setting, where the agents have to learn useful knowledge with no access to task rewards, we can develop algorithms that learn more robust and transferable policies even in the traditional RL setting.

In this paper we propose to use the ERL stage to discover disentangled features through task-free interactions with the environment, and then solving a number of goal-driven self-generated tasks specified in the learnt disentangled feature space. In particular, we suggest learning $k$ disentangled features, discretising them into $m$ bins each, and learning $km$ feature control policies that achieve the respective bin value of the given feature. We then re-combine the feature control policies learnt in the ERL stage to solve downstream tasks in the RL stage in a few-shot manner using Generalized Policy Improvement (GPI) \citep{Barreto_etal_2018}. We demonstrate theoretically and empirically that our proposed set of basis policies that learn to control disentangled features produce significantly better generalisation over a large number of downstream tasks.

Intuitively, disentangled representations consist of the smallest set of features that represent those aspects of the world state that are independently affected by natural transformations and together explain the most of the variance observed in an environment \citep{Higgins_etal_2018b} (see Fig.~\ref{fig_disentangled_mdp}). Disentangled features, therefore, carve the world at its joints and provide a parsimonious representation of the world state that also points to which aspects of the world are stable, and which can in principle be transformed independently of each other. We conjecture that disentangled features align well with the idealised state space in which natural tasks are defined. Hence, by learning a set of policies that can control these features an agent will acquire a set of basis policies which spans a large set of natural tasks defined in such an environment. Note that both disentangled features and their respective control policies can be learnt without an externally specified task, purely in the ERL setting. We provide both theoretical justification for this setup, as well as experimental illustrations of the benefit of disentangled representations in a large set of tasks of varying difficulty.


\begin{figure}[t]
\begin{center}
\centerline{\includegraphics[width=1.\linewidth]{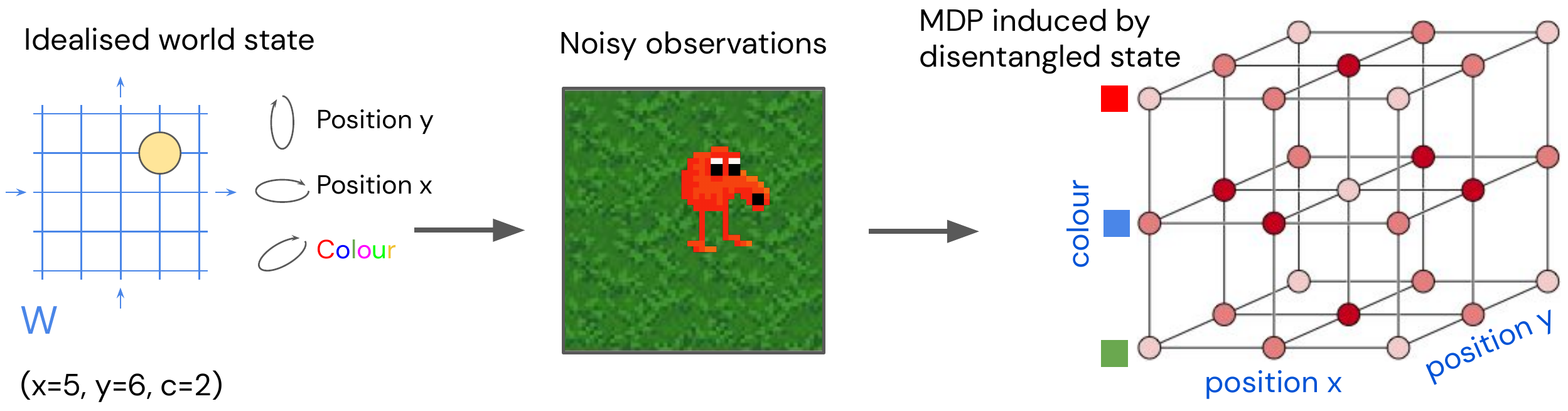}}
\caption{The idealised world state completely described by compositions of the following independent transformations: changes in position x, y and colour. Such a state may be projected into a high-dimensional observation, which may contain a lot of irrelevant detail, like the particular instantiation of the Qbert, or the grassy background. Disentangled representations recover the meaningful information about the independently transformable aspects of the world and disregard the irrelevant details.}
\label{fig_disentangled_mdp}
\end{center}
\vskip -0.3in
\end{figure}

Hence, the main contribution of this work is a theoretical result that extends the GPI framework to guarantee achievability on a large set of natural goal-driven tasks given a small set of basis policies that control disentangled features. In particular, we demonstrate that given $k$ disentangled features discretised into $m$ bins, we can guarantee achievability with a deterministic policy on at least $(m+1)^k$ downstream tasks by using GPI to recombine $km$ feature control policies discovered and learnt purely through intrinsically driven interactions with the environment in the total absence of environment rewards. Our result holds for any tasks that can be specified in natural language and do not require a particular ordering of the actions to be solved. For example, our approach would be able to solve a task that requires sorting objects in space based on their colour or shape, or tidying up a messy playroom by putting all the toys in a box, but it will not be able to solve a task like cooking a meal, where the execution order of the different stages in the recipe matters.

\paragraph{Related work} Past work on supervised and reinforcement learning has demonstrated how multi-task learning, transfer and adaptation can provide strong performance gains across various domains \citep{caruana1997multitask,thrun2012learning,yosinski2014transferable, girshick2014rich, Jaderberg_etal_2017, riedmiller2018sac, wulfmeier2019regularized}. Typically these approaches use hand-crafted auxiliary tasks to boost learning of the downstream tasks of interest, which is not scalable and comes with no guarantees on which set of auxiliary tasks is optimal for boosting performance on a large number of downstream tasks. A number of other past approaches shared our motivation of replacing the hand-crafted auxiliary task specification by an automatic way of discovering a diverse and useful set of policies in the absence of externally specified tasks. The predominant approach so far has been to optimise an objective that encourages behaviours that are both diverse and distinguishable from each other \citep{gregor2017vic, eysenbach2019diayn, hansen2019visr}, or to learn how to solve intrinsic tasks sampled from a learnt representation space \citep{Nair_etal_2018, Laversanne-Finot_etal_2018}. While these approaches have been shown to be successful on transferring the learnt policies to solve certain downstream tasks, none of them provided theoretical guarantees on the downstream task coverage by the basis set of policies. Such guarantees were however provided by \cite{niekerk2018composing} and \cite{Barreto_etal_2017, Barreto_etal_2018}. These papers calculated how well a given set of policies can be transferred to solve a wide range of downstream tasks. However, they left the question of how to discover such a set of basis policies open. Hence, our work provides a unique perspective by addressing both the questions of what makes a good basis set of policies to get certain guarantees on final task coverage, and how these policies may be learnt in the ERL setting. Other related literature worth noting is the work by \cite{Higgins_etal_2017b}, who showed that learning a downstream task policy over disentangled representations improved its robustness to visual changes in the environment. Another piece of work \citep{Machado_etal_2018} demonstrated the usefulness of discovering reward-agnostic options through successor feature learning for improving data efficiency in downstream task learning. The benefits of these options, however, were primarily through improving exploration. No guarantees were given in terms of downstream task coverage.

\section{Background}
\label{sec_RL_formalism}
\paragraph{Basic Reinforcement Learning (RL) formalism.} An RL agent interacts with its environment through a sequence of actions in such a way as to maximise the expected cumulative discounted rewards \citep{Sutton_Barto_1998}. The RL problem is typically expressed using the formalism of Markov Decision Processes (MDPs) \citep{Puterman_1994}. An MDP is a tuple $M = (\mathcal{S}, \mathcal{A}, \mathcal{P}, \mathcal{R}, \gamma)$, where $\mathcal{S}$ and $\mathcal{A}$ are the sets of states and actions, $\mathcal{P}$ is the transition probability that predicts the distribution over next states given the current state and action $s' \sim \mathcal{P}(\cdot|s, a)$, $\mathcal{R}$ is the distribution of rewards $r \sim \mathcal{R}(s, a, s')$ received for making the transition $s \overset{a}{\mapsto} s'$, and $\gamma \in [0, 1)$ is the discount factor used to make future rewards progressively less valuable. Given an MDP, the goal of the agent is to maximise the expected return $G_t = \sum_{i=0}^{\infty} \gamma^i r_{t+i}$. This is done by learning a policy $\pi(a|s)$ that selects the optimal action $a \in \mathcal{A}$ in each state $s \in \mathcal{S}$. A typical RL problem attempts to find the optimal policy $\pi^* = \underset{\pi}{\textup{argmax}}\  \mathbb{E} \left [ \sum_{t \geq 0} \gamma^t r | \pi \right ]$, where the expectation is taken over all possible interaction sequences of the agent's policy with the environment. The optimal policy is learnt with respect to a particular task operationalised through the choice of the reward function $\mathcal{R}(s, a, s')$. 

\paragraph{Successor Features} 
The successor feature (SF) representation is a way of decoupling the dynamics of an environment from its reward function. This is done by representing an environment reward as $r(s, a, s') = \phi(s,a,s')^\top \vw$ where $\phi(s,a,s')$ is a vector of environment features. Notably, this representation does not decrease the expressivity of $r$ since no assumptions are made on the form of $\phi$. Moreover, this representation allows for decomposing value functions as follows:
\begin{equation}
\label{eq_sf}
    Q^{\pi}(s, a) = \mathbb{E}^{\pi} \left [ \sum_{k=0}^{\infty } \gamma^k \phi_{}(s_t, a_t, s_{t+1})| s_0 = s, a_t = a  \right ]^{\top}\vw_j = \psi(s, a)^{\pi\top}\vw_j.
\end{equation}
where $\psi(s,a)^{\pi}$ is a vector of reward-independent \textit{successor features}. 

\paragraph{GPI \& GPE} Generalised Policy Improvement (GPI) and Generalised Policy Evaluation (GPE) \citep{Barreto_etal_2017} can be used to transfer a set of existing policies to solve new tasks. The framework is specified for a set of MDPs:

\begin{equation}
\label{eq_mdps_for_gpi}
 \mathcal{M}^{\phi}(\mathcal{S}, \mathcal{A}, \mathcal{P}, \gamma) = \{ M^{\phi}(\mathcal{S}, \mathcal{A}, \mathcal{P}, r, \gamma)\ |\ r(s,a,s')=\phi(s, a, s')^{\top}\vw \}    
\end{equation}

induced by all possible choices of weights $\vw$ that specify all possible rewards $r$, given a state space $\mathcal{S}$, action space $\mathcal{A}$, transition probabilities $\mathcal{P}$, discount factor $\gamma$ and features $\phi(s, a, s')$. Note that the features are meant to be the same for all MDPs $M \in \mathcal{M}^{\phi}$. Given a policy $\pi_i$ learnt to solve task $i$ specified by $\vw_i$, we can evaluate its value under a different reward $r_j = \phi(s, a, s')^{\top}\vw_j$ using GPE:

\begin{equation}
\label{eq_gpe}
    Q^{\pi_i}_j(s, a) = \psi(s, a)^{\pi_i\top}\vw_j
\end{equation}
using our definition of successor features defined in~(\ref{eq_sf}).

Hence, given a set of policies $\pi_1$, $\pi_2$, ..., $\pi_i$ induced by rewards $r_1$, $r_2$, ..., $r_i$ over a subset of the MDPs $M' \subset \mathcal{M}^{\phi}$, we can get a new policy $\pi_j$ for a new task induced by $r_j$ (note that $M_j \in \mathcal{M}^{\phi}$,\ $M_j \cap M' = \emptyset $) according to:

\begin{equation}
    \pi_j(s) = \underset{a}{\textup{argmax}}\ \underset{i}{\textup{max}}\ \psi(s, a)^{\pi_i\top}\vw_j. 
    \label{eq_gpi}
\end{equation}

\section{Endogenous RL with GPE and GPI}
\label{sec:erl}
This section provides a general overview of the proposed ERL pipeline, consisting of (1) a representation learning phase, (2) an intrinsic reinforcement learning phase and (3) a few-shot learning phase when a new task is presented, where steps 1-2 form the ERL stage, and step 3 forms the RL stage (see Fig.~\ref{fig_method}). In Sec.~\ref{sec_theory} we will present the main theoretical contribution of our paper where we will discuss why a particular version of the pipeline that uses  \emph{disentangled} features is expected to perform well in the RL stage.

\begin{figure}[t]
\begin{center}
\centerline{\includegraphics[width=1.\linewidth]{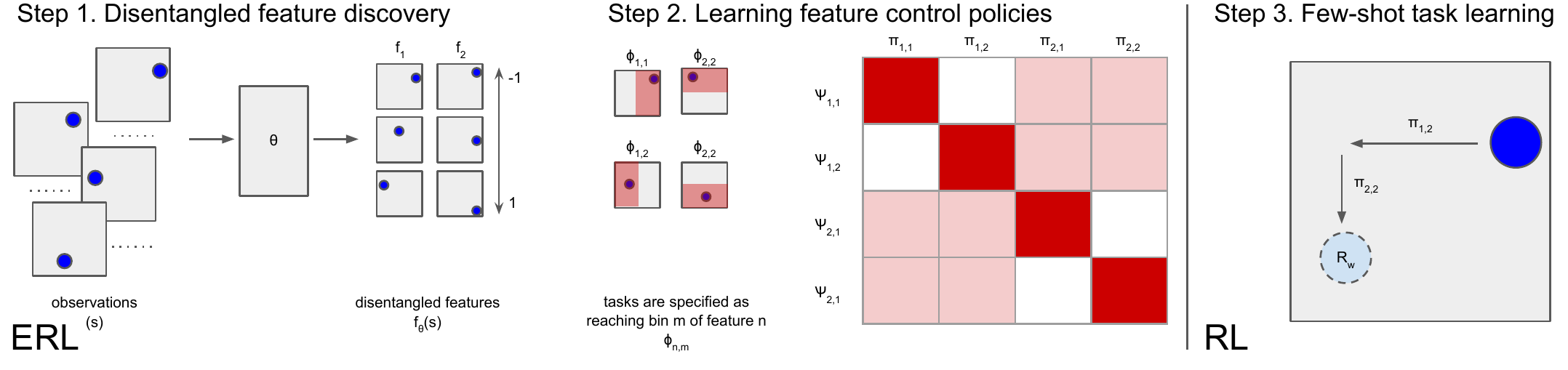}}
\caption{Schematic illustration of the three steps of our method. First we use an existing method for unsupervised disentangled feature discovery from observations obtained using an exploration policy. We then learn control policies that learn to achieve certain uniformly spread values for the learnt features. Finally, we use the feature control policies to solve tasks using the GPI framework. The first two steps do not require any extrinsic rewards.}
\label{fig_method}
\end{center}
\vskip -0.3in
\end{figure}

\paragraph{Representation learning.}
At the beginning of the pipeline our agents learn a parameterized representation of the environment's state which we use to define a set of features $\phi_{1:n}(s) \in \Phi \subseteq \R^n$. Each feature $\phi_i$ will be used as a cumulant that gives rise to an RL task. Specifically, each feature $\phi_i$ will induce an associated optimal policy $\pi_i$ that maximises the expected discounted sum of $\phi_i$. 

We propose defining features $\phi_i$ in the following way. Let $f_{1:k}: \mathcal{S} \mapsto [0, 1]^k$ be an arbitrary continuous function. We define a discretization of each $f_i(s)$ into $m$ uniformly spaced bins: $[0, 1/m), [1/m, 2/m), \ldots, [(m-1)/m, 1)$, denoted $b_1, \ldots, b_m$ respectively. Using this notation, we can define a set of $n = mk$ features as follows:
\begin{equation}
\label{eq_features}
\phi_{ij}(s, a) = \boldsymbol{1} \{ f_i(s) \in b_j \},
\end{equation}
where $\boldsymbol{1}\{\cdot\}$ is the indicator function. Note that, to facilitate the exposition, we use two indices to refer to a specific feature; obviously, we can ``flatten'' these indices and treat the features as a vector if needed. Intuitively, we can think of the task induced by $\phi_{ij}(s, a)$ as setting the $i$-th representational latent dimension to a value in the interval $b_j$. Specific choices of $m$ and $k$ in our experiments are explained in the supplementary material.

\paragraph{Intrinsic RL}

As discussed above, each feature $\phi_{ij}$ gives rise to an RL task. The second stage of our pipeline consists of solving these tasks. Crucially, instead of computing the value function of policy $\pi_{ij}^*$ with respect to cumulant $\phi_{ij}$ only, we will compute the successor features of $\pi_{ij}^*$ with respect to \emph{all} cumulants:
\begin{equation}
\label{eq_sf_matrix_term}
\psi_{lh}^{\pi_{ij}^*}(s, a) = \mathbb{E}_{\pi_{ij}^*}\left[\sum_{t=0}^\infty \gamma^t \phi_{lh}(s_t,a_t) \left| s_0=s, a_0=a  \right. \right]
\end{equation}
where $\pi_{ij}^*(s)$ is one of the optimal policies induced by cumulant $\phi_{ij}$, that is, $\pi_{ij}^*(s) \in \argmax_{\pi} Q^\pi_{ij}(s,\pi(s))$.

Collectively the successor features $\psi_{lh}^{\pi_{ij}^*}(s, a)$ can be thought of as an $n \times n$ matrix $\boldsymbol{\Psi}$ with cumulants in one dimension and policies in the other dimension. The matrix $\boldsymbol{\Psi}$ represents the agent's knowledge of how to manipulate the features of the environment. We will also use our double-subscript notation to refer to specific elements of $\boldsymbol{\Psi}(s,a)$: $\boldsymbol{\Psi}_{(lh),(ij)}(s,a) = \psi_{lh}^{\pi_{ij}^*}(s,a)$. 

\paragraph{Few-shot learning phase}
Provided our agent has invested an initial effort to accurately learn the matrix $\boldsymbol{\Psi}$, we can now leverage it to perform few-shot learning.
First, note that any linear combination of cumulants $\phi_{1:n}$, $\phi_w = \sum_i w_i \phi_i$, is itself a cumulant. We can then define the set of cumulants
\begin{equation}
\Phi = \left\{ \phi_w(s,a) = \sum_{i,j} w_{ij} \phi_{ij}(s,a)  \; | \;  w \in \mathbb{R}^{k \times m} \right\}.
\end{equation}
Given an arbitrary task, we can find a cumulant $\phi_w \in \Phi$ that approximates the task as well as possible. One way to do so is to select $w \in \R^{k \times m}$ such that
\begin{equation}
\label{eq:regression}
w = \argmin_{w' \in \R^{k \times m}} \, \E_{(s,a) \sim \mathcal{D}}\left[|| \sum_{ij} w'_{ij}\phi_{ij}(s,a) - r(s,a)||\right], 
\end{equation}
where $\mathcal{D}$ is a distribution over $\mathcal{S} \times \mathcal{A}$ and $||\cdot||$ is a norm. Note that~(\ref{eq:regression}) is a linear regression. 

Once we have computed $w \in \R^{k \times m}$, we can use the successor features of policy $\pi_{ij}^*$ to evaluate it on task $\phi_w$ (a process sometimes referred to as Generalized Policy Evaluation or GPE):
\begin{equation}
\label{eq:gpe}
Q_{w}^{\pi_{ij}^*}(s,a) = \sum_{l,h} w_{lh} \psi_{lh}^{\pi_{ij}^*}(s,a), 
\end{equation}
where $Q_{w}^{\pi_{ij}^*}(s,a)$ is the action-value function of $\pi_{ij}^*$ under cumulant $\phi_w$. 
Finally, by using GPI we can directly compute a policy based on the known policies $\pi_{ij}^*$: 
\begin{equation}
\label{eq:gpi}
\pi^\text{GPI}_w(s) = \argmax_a \  \max_{ij}  Q_{w}^{\pi_{ij}^*}(s,a).
\end{equation}
While $\phi_w$ is not guaranteed to be optimal with respect to $\phi_w$, its performance on this task is at least as good, and generally better, than that of the known policies $\pi_{ij}^*$~\citep{Barreto_etal_2017}. Moreover, the computation of $\pi_w^\text{GPI}$ is immediate given the matrix $\boldsymbol{\Psi}$ and $w$. Since we assume the matrix $\boldsymbol{\Psi}$ has been pre-computed in the ERL phase, this essentially reduces an RL problem to the regression problem shown in~(\ref{eq:regression}).  


\section{Theoretical Results}
\label{sec_theory}
This section forms the main theoretical contribution of this paper. We highlight the importance of the \textit{choice} of latent representation used in our learning pipeline. Particularly, we show that when the agent's latent representation exhibits a particular form of disentanglement, we can exploit this property to both accelerate the learning of our successor feature matrix and guarantee that GPI finds solutions to certain families of tasks. 

Disentangled representations have recently been connected to symmetry transformations \citep{Higgins_etal_2018b}, a powerful idea borrowed from physics. Roughly speaking, a symmetry transformation for a system is a transformation that leaves some property of the system unchanged. In the context of machine learning, the idea of symmetries is generalised to any structure preserving transformation, like a change in colour, shape, or position of an object. All of these transformations commute with each other in the natural world and can be applied independently, hence forming a symmetry group. This induces an MDP in the state space of discretised disentangled latent dimensions that resembles a hypercube (see Fig.~\ref{fig_disentangled_mdp}), where the nodes on each edge correspond to different values of a single disentangled dimension. Due to the commutative property of disentangled transformations, it suffices to learn how to move along the individual edges of such a hypercube to be able to reach any state inside the hypercube. Hence, by learning a disentangled latent space with $z$ dimensions, and discretising them into $b$ bins, we can re-use $zb$ feature control policies to solve $(b+1)^z$ downstream goal-based tasks, where $b+1$ includes the extra bin value per disentangled dimension that indicates that the value of the corresponding dimension is irrelevant to the task. This intuition is discussed more formally next.

\paragraph{Optimal independent controllability}
Here we formalise a disentangled representation in terms of features that can be optimally controlled without affecting other features. Let $\phi_{1:n}$ be a set of features and let $\pi^*_i$ be the optimal deterministic policy associated with the control task induced by the cumulant $\boldsymbol{1}\{ \phi_i(s) \in \R_i \}$, with $\R_i \subset \R$. 
Let $(s_t)_{t=1}^N$ be a sequence of random variables generated by starting in state $s_0$ following $\pi_{i}^*$. We call $\phi_{1:n}$ \textit{optimally independently controllable} (OIC) if $\mathbb{E}\left[ \phi_j(s_t) \right] = \phi_j(s_0)$ for all $j \neq i$  and $t \in \{1, \ldots, N\}$.  

Note that if we use the features defined in~(\ref{eq_features}) we can have control tasks induced by $\boldsymbol{1}\{ \phi_{ij}(s)=1 \} = \boldsymbol{1}\{ f_i(s) \in b_j \}$ for $j=1,2,...,m$. In this case two features $\phi_{ij}$ and $\phi_{hl}$ can be OIC only if $i \ne h$. We will abuse the terminology slightly and say that $f_{1:n}$ are OIC if any pair of the induced features $\phi_{ij}, \phi_{hl}$ is OIC when $i \ne h$. Without loss of generality we will assume henceforth that we are using features defined as in~(\ref{eq_features}).

An immediate consequence of a set of OIC features is that values under rewards and policies associated with different features have a simple form: 
\begin{lemma} \label{lemma:off_diagonal_trick}
When $f_{1:n}$ are optimally independently obtainable the successor feature matrix, $\boldsymbol{\Psi}$, has the following form:
\begin{equation}
\boldsymbol{\Psi}_{(lh),(ij)}(s,a) = \begin{cases}
\frac{1}{1 - \gamma} \phi_{lh}(s,a) & \text{ if } i \neq l \\
\Psi_{lh}^{\pi^*_{ij}}(s,a) & \text{ else.}
\end{cases}
\end{equation}
\end{lemma}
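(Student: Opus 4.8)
The plan is to unfold the definition of the successor feature in~(\ref{eq_sf_matrix_term}), push the expectation through the discounted sum by linearity, and then apply the OIC property termwise. Concretely, I would write $\psi_{lh}^{\pi^*_{ij}}(s,a) = \sum_{t=0}^\infty \gamma^t\, \mathbb{E}_{\pi^*_{ij}}[\phi_{lh}(s_t) \mid s_0 = s, a_0 = a]$, so that everything reduces to understanding the per-step expectations $\mathbb{E}[\phi_{lh}(s_t)]$ along the trajectory generated by $\pi^*_{ij}$.

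The case $i = l$ requires nothing: here the cumulant $\phi_{lh}$ indexes the same latent dimension that $\pi^*_{ij}$ is controlling, so there is no structural simplification and $\boldsymbol{\Psi}_{(lh),(ij)}(s,a)$ is simply left as the raw successor feature $\psi_{lh}^{\pi^*_{ij}}(s,a)$, which is exactly the second branch of the claim.

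The substantive case is $i \neq l$, where I would invoke OIC directly. At $t = 0$ the term is $\phi_{lh}(s_0) = \phi_{lh}(s,a)$, since the features defined in~(\ref{eq_features}) depend only on the state and so the action argument is immaterial. For every $t \geq 1$ the OIC hypothesis — applied with the controlled dimension being $i$ and the ``other'' dimension being $l \neq i$ — gives $\mathbb{E}[\phi_{lh}(s_t)] = \phi_{lh}(s_0) = \phi_{lh}(s,a)$. Substituting these identical terms into the discounted sum collapses it to a geometric series, $\sum_{t=0}^\infty \gamma^t \phi_{lh}(s,a) = \frac{1}{1-\gamma}\phi_{lh}(s,a)$, which is the first branch.

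The one point I would treat carefully — and which I expect to be the main obstacle — is the mismatch between the conditioning in the successor feature and the trajectory in the OIC definition. The successor feature fixes an arbitrary opening action $a_0 = a$ and only then follows $\pi^*_{ij}$, whereas OIC is stated for a trajectory that follows $\pi^*_{ij}$ from the outset. When $a = \pi^*_{ij}(s)$ the two coincide and the argument above is airtight; for a general $a$ one must either read OIC as holding for the chain obtained after an arbitrary first action, or restrict the claim to the greedy action (which is all that the downstream GPI rule in~(\ref{eq:gpi}) actually evaluates). I would also record that OIC must be taken to hold for all $t$ — the infinite-horizon reading of the finite-horizon definition — so that the geometric series is justified, and I would remark that because $\phi_{lh}$ is a $\{0,1\}$-valued indicator, the expectational OIC condition in fact pins $\phi_{lh}(s_t)$ to its initial value almost surely, so the same conclusion holds pathwise if a stronger statement is ever needed.
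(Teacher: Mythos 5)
Your proposal is correct and follows essentially the same route as the paper's proof: unfold the definition in~(\ref{eq_sf_matrix_term}), apply OIC termwise to each $\mathbb{E}[\phi_{lh}(s_t)]$ for $l \neq i$, and sum the resulting geometric series to get $\frac{1}{1-\gamma}\phi_{lh}(s,a)$, with the $i = l$ case left as a tautology. The caveats you flag --- the mismatch between conditioning on an arbitrary $a_0 = a$ versus following $\pi^*_{ij}$ from the outset, the finite horizon $N$ in the OIC definition versus the infinite discounted sum, and the fact that the expectational OIC condition pins the indicator $\phi_{lh}(s_t)$ pathwise --- are all points the paper's own proof silently glosses over (it asserts $f_l(s_t) = f_l(s_0)$ directly), so your version is, if anything, the more careful rendering of the same argument.
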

\begin{proof}
The proof follows directly from the definition of OIC features. If $l \neq i$ then under $\pi_{i,j}^*$ $f_l(s_t) = f_l(s_0)$, and thus $\phi_{lh}(s_t, a_t) = \phi_{lh}(s_0, a_t)$, giving:
\begin{equation} \label{eq:off_diagonal_terms}
\begin{aligned}
\Psi_{lh}^{\pi^{*}_{ij}}(s,a) &= \mathbb{E}\left[ \sum_{t=0}^\infty \gamma^t \phi_{lh}(s_t, a_t) \left| \pi_{ij}^{*}, s_0 = s, a_0 = a \right. \right] = \mathbb{E}\left[ \sum_{t=0}^\infty \gamma^t \phi_{lh}(s_0, a_t) \left| \pi_{ij}^*, s_0 = s, a_0 = a \right.  \right] \\ 
&= \frac{1}{1 - \gamma} \phi_{lh}(s, a).
\end{aligned}
\end{equation}
\end{proof}

Intuitively, Lemma~\ref{lemma:off_diagonal_trick} states that if the feature $f_i$ associated with policy $\pi_{ij}^*$ is different from the corresponding feature $f_l$ used to define $\phi_{lh}$ the associated successor feature $\boldsymbol{\Psi}_{(lh),(ij)}(s,a) = \psi^{\pi_{ij}^*}_{lh}(s,a)$ reduces to $(1 - \gamma)^{-1} \phi_{lh}(s,a)$. This reduces learning the matrix $\boldsymbol{\Psi}$ to learning a subset of its entries. 



\paragraph{Guarantees for conjunctions of goal-based tasks}

In addition to simplifying the process of learning the successor feature matrix $\boldsymbol{\Psi}$, features with the OIC property come with guarantees under GPI for certain goal-based tasks. We define a goal-based task as one whose reward function has the form
\begin{equation}
\label{eq:r_goal}
R_G(s) = \boldsymbol{1}\{ s \in G \}
\end{equation}
where $G \subset \mathcal{S}$. Given the above definition, we say that a policy $\pi$ ``achieves'' $G$ if $V_{R_G}^\pi(s) > 0$ for all $s \in \mathcal{S}$. 

Our uniform discretization of features $f_{1:k}$ into bins $b_{1:m}$ naturally induces a partition over of the state-space: 
\begin{equation}
\mathcal{B}(\mathcal{S}) = \{ B_{i_1, \ldots, i_k} : i_1, \ldots i_k \in [m] \}
\end{equation}
where
\begin{equation}
B_{i_1, \ldots, i_k} = \bigcap_{j=1}^k f_j^{-1}\left(b_{i_j}\right) \subset \mathcal{S}.
\end{equation}
Intuitively, we can think of each partition $B_{i_1, \ldots, i_k}$ as one of the possible $m^k$ configurations of the features $\phi_{ij}$ (note that there are fewer than $2^{mk}$ configurations because some of them are impossible, as two bins associated with the same feature cannot be active at the same time). We can then think of these partitions as goal regions analogous to~(\ref{eq:r_goal}). We now show that for \emph{any} goal $g \in \mathcal{B}(\mathcal{S})$ there exist a linear combination of the cumulants $\phi_{ij}$ that leads to a GPI policy that achieves $g$.

\begin{thm}
If $f_{1:k}$ are OIC, then for any $g \in \mathcal{B}(\mathcal{S})$ there exists a $w \in \R^{k \times m}$ such that $\pi^\text{GPI}_{w}$ as defined in~(\ref{eq:gpe}) and~(\ref{eq:gpi}) achieves $g$. One such $w$ is given by $w^g_{ij} = \boldsymbol{1}\{ f_i(g) = b_j \}$.
\end{thm}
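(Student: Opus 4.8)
The plan is to show that the prescribed weight vector $w^g$ turns the GPI policy into a procedure that drives the target features into their goal bins one at a time, while the OIC property guarantees that no already-correct feature is ever disturbed. First I would unpack the cumulant associated with $w^g$. Since $w^g_{ij} = \boldsymbol{1}\{f_i(g) = b_j\}$ picks out, for each feature $i$, the unique target bin $b_{j^\star_i}$ determined by $g$, the induced cumulant is $\phi_{w^g}(s,a) = \sum_i \phi_{i,j^\star_i}(s,a) = C(s)$, where $C(s)$ counts the features whose current bin already matches $g$ and $c_i(s) = \phi_{i,j^\star_i}(s)$. This count is bounded by $k$ and equals $k$ exactly on $g$, so $R_G(s) = \boldsymbol{1}\{C(s) = k\}$.

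Next I would use Lemma~\ref{lemma:off_diagonal_trick} to put the GPE value of each base policy in closed form. For a base policy $\pi^*_{ab}$ controlling feature $a$, the off-diagonal successor features collapse, giving $Q_{w^g}^{\pi^*_{ab}}(s,a') = \psi_{a,j^\star_a}^{\pi^*_{ab}}(s,a') + (1-\gamma)^{-1}\sum_{l \neq a} c_l(s)$. Because $\psi_{a,j^\star_a}^{\pi^*_{ab}}$ is just the value of $\pi^*_{ab}$ on the cumulant $\phi_{a,j^\star_a}$, it is pointwise dominated by the optimal control value $Q^*_{a,j^\star_a}$, which is attained by the matched policy $\pi^*_{a,j^\star_a}$. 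Hence the inner maximisation in GPI only ever retains the $k$ matched policies, and the GPI objective reduces to $\max_a [\,Q^*_{a,j^\star_a}(s,a') - (1-\gamma)^{-1} c_a(s)\,]$ up to the state-constant offset $(1-\gamma)^{-1}C(s)$. Maximising over $a'$ and comparing a correct feature ($c_a = 1$, contribution $V^*_{a,j^\star_a}(s) - (1-\gamma)^{-1} \le 0$) with an incorrect one ($c_a = 0$, contribution $V^*_{a,j^\star_a}(s) > 0$ whenever its target bin is reachable), I would conclude that the GPI argmax always picks the control action of some currently-incorrect feature.

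The core of the argument is then twofold. (i) Monotonicity: the GPI action at $s$ is the greedy action of $\pi^*_{a,j^\star_a}$ for an incorrect $a$, so by the one-step instance of the OIC definition every feature $l \neq a$ is preserved; since the features are binary this means correct features stay correct almost surely, and the only feature that can change is the incorrect one being controlled. Thus $C(s_t)$ is non-decreasing along GPI trajectories. (ii) Strict progress: for $s \notin g$ the GPI improvement guarantee \citep{Barreto_etal_2017} gives $V_{\phi_{w^g}}^{\pi^\text{GPI}_{w^g}}(s) \ge V_{\phi_{w^g}}^{\pi^*_{a,j^\star_a}}(s) = V^*_{a,j^\star_a}(s) + (1-\gamma)^{-1}C(s) > (1-\gamma)^{-1}C(s)$ for a reachable incorrect $a$. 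Since $C$ is non-decreasing, $V_{\phi_{w^g}}^{\pi^\text{GPI}_{w^g}}(s) = \E[\sum_\tau \gamma^\tau C(s_\tau)] \ge (1-\gamma)^{-1}C(s)$ with equality iff $C$ never increases, so the strict inequality forces $C$ to increase with positive probability. Combining (i) and (ii), from any start state $C$ climbs, level by level, to $k$ with positive probability, i.e. the trajectory reaches $g$, yielding $V_{R_G}^{\pi^\text{GPI}_{w^g}}(s) > 0$.

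The step I expect to be the main obstacle is making the reachability hypothesis explicit and precise: the whole argument hinges on $V^*_{a,j^\star_a}(s) > 0$ for every incorrect feature $a$ and every state $s$, i.e. on each bin being reachable (and the other features holdable, via OIC, while this is done). This is implicitly required for the statement to be true at all — if some target bin were unreachable from some state, no policy could achieve the corresponding $g$ — so I would either fold it into the standing assumptions on the feature-control MDPs or flag it as a hypothesis of the theorem. The remaining care is routine: checking that every tie-breaking choice in the GPI argmax is still a control action of some incorrect feature (so monotonicity is unaffected), and confirming that ``increases with positive probability at each of the at most $k$ levels'' indeed compounds into positive probability of hitting $g$.
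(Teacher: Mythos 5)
Your proposal is correct and follows the paper's own route for the main reduction: you invoke Lemma~\ref{lemma:off_diagonal_trick} to collapse the off-diagonal successor features, absorb the already-satisfied bins into a state-dependent constant, and reduce the GPI maximisation to the matched policies $\pi^*_{i,j^\star_i}$ of the not-yet-satisfied features --- exactly the paper's $\mathbb{W}(s)$ decomposition. Where you differ, and in fact improve on the paper, is at the terminal step: the paper stops at the informal claim that $\pi^\text{GPI}_{w^g}$ will ``persistently pursue'' unachieved features so that ``eventually'' all are set to $1$, whereas you supply the missing argument --- one-step OIC plus binariness of the features gives almost-sure preservation of correct bins (monotonicity of the count $C$), and the GPI improvement bound yields a strict value inequality that forces $C$ to increase with positive probability, compounding over at most $k$ levels into $V^{\pi^\text{GPI}_{w^g}}_{R_G}(s) > 0$. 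You also make explicit two points the paper passes over silently: the domination $\psi^{\pi^*_{ij}}_{i,j^\star_i} \le Q^*_{i,j^\star_i}$ that justifies discarding mismatched policies in the inner max (the paper simply asserts the step from $\max_{ij}$ to $\max_i$), and the reachability hypothesis $V^*_{i,j^\star_i}(s) > 0$ for every state and every target bin, without which the theorem as stated fails; you are right that this belongs among the standing assumptions rather than being left implicit.
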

\begin{proof}
Recall that $\pi_{w^g}^\text{GPI} = \argmax_a \ Q_{w^g}^\text{max}(s,a)$, where $Q_{w_g}^\text{max}(s,a) = \max_{ij} \sum_{lh} w^g_{lh} \Psi_{lh}^{\pi^*_{ij}}(s,a)$. 
We begin by rearranging terms in $Q_{w^g}^\text{max}$:
\begin{equation}
\begin{aligned}
Q_{w^g}^\text{max}(s,a) &= \max_{ij} \sum_{lh} w^g_{lh} \Psi_{lh}^{\pi^*_{ij}}(s,a) \\
&= \max_{ij} \sum_{h=1}^m w_{ih}^g \Psi_{ih}^{\pi^*_{ij}}(s,a) + \sum_{h=1}^m \sum_{l \neq i} w_{lh}^g \Psi_{lh}^{\pi_{ij}^*}(s,a) \\
&= \max_{ij} \sum_{h=1}^m w_{ih}^g \Psi_{ih}^{\pi^*_{ij}}(s,a) + \frac{1}{1 - \gamma} \sum_{h=1}^m \sum_{l \neq i} w_{lh}^g \phi_{lh}(s,a) \\
&= \max_{ij} \sum_{h=1}^m w_{ih}^g \Psi_{ih}^{\pi^*_{ij}}(s,a) + \frac{1}{1 - \gamma} \sum_{h=1}^m \sum_{l=1}^k w_{lh}^g \phi_{lh}(s,a) - \frac{1}{1 - \gamma} w_{ih}^g \phi_{ih}(s,a) \\
&= C(s) + \max_{ij} \sum_{h=1}^m w_{ih}^g \left[ \Psi_{ih}^{\pi^*_{ij}}(s,a) - \frac{1}{1-\gamma} \phi_{ih}(s,a)\right]
\end{aligned}
\end{equation}
where the third equality follows from Lemma \ref{lemma:off_diagonal_trick} and $C(s)$ captures $\phi_{lh}(s,a)$ terms (which do not depend on $a$ or $i$). 

First note that, from the form of $w^g$, for each $i$ there is exactly one $j$ such that $w^g_{ij} = 1$ with all other entries being $0$. Denote this $j$ as $b(i)$. We can then rewrite:
\begin{equation}
\begin{aligned}
Q_{w^g}^\text{max}(s,a) &= C(s) + \max_{ij} \left[ \Psi_{i b(i)}^{\pi^*_{ij}}(s,a) - \frac{1}{1-\gamma} \phi_{i b(i)}(s,a) \right] \\
&= C(s) + \max_i \left[ \Psi_{i b(i)}^{\pi^*_{i b(i)}}(s,a) - \frac{1}{1-\gamma} \phi_{i b(i)}(s,a) \right]
\end{aligned}
\end{equation}

Next notice that $\Psi_{i b(i)}^{\pi_{i b(i)}^*}(s,a) - \frac{1}{1-\gamma} \phi_{i b(i)}(s,a)$ is $0$ if $f_i(s) \in b_{b(i)}$ and $\Psi_{i b(i)}^{\pi_{i b(i)}^*}(s,a)$ otherwise, giving: 
\begin{equation}
Q_{w^g}^\text{max}(s,a) = C(s) + \max_{i \in \mathbb{W}(s)} \Psi_{i b(i)}^{\pi_{i b(i)}^*}(s,a) 
\end{equation}
where $\mathbb{W}(s) = \{ i : f_i(s) \notin b_{b(i)} \}$ or, more plainly, the set of feature indices that have not been achieved yet. This gives 
\begin{equation}
\pi^{GPI}_{w^g}(s) = \argmax_a \ Q_{w^g}^\text{max}(s,a) = \argmax_a \max_{i \in \mathbb{W}(s)} \Psi_{i b(i)}^{\pi^*_{i b(i)}}(s, a),
\end{equation}
implying that $\pi^\text{GPI}_{w^g}$ will persistently pursue the ``unachieved'' feature ($\phi_{lh} = 0$) that is easiest to ``achieve'' (that is, to be set to $\phi_{lh} = 1$) among the features associated with nonzero elements in $w^g$. This means that eventually all such features will be set to $1$, which in turn implies that goal $g$ has been achieved.
\end{proof}


\section{Spriteworld Experiments}
In this section we experimentally validate that an agent can effectively use task-free interactions with an environment to gain a boost in data efficiency across a wide range of subsequent tasks. In particular, we test whether an agent that uses GPI to transfer a set of feature control policies discovered in the ERL setting has a boost in performance over a baseline DQN agent that learns each task from scratch. We also validate whether disentangled feature control policies form a better basis for transfer compared to entangled feature control policies, and whether our pipeline outperforms DIAYN \citep{eysenbach2019diayn}, a state of the art approach for discovering a diverse set of policies in the absence of external tasks. We chose DIAYN as our baseline, since it is the closest method to ours in terms of motivation (e.g. it does not rely on goal-conditioning in the RL stage unlike \cite{Nair_etal_2018}), and is the current published state-of-the-art method in its class (unlike \cite{gregor2017vic, hansen2019visr}).  

\begin{wrapfigure}{R}{0.2\textwidth}
  \vspace{-15pt}
  \centering
  \includegraphics[width=0.8\linewidth]{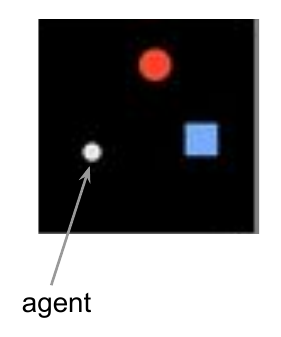}
  \caption{Spriteworld environment. The agent can move up/down, left/right and drag objects when it steps on them.
  }
  \label{fig_spriteworld}
\end{wrapfigure}

We validate our ideas on a toy Spriteworld environment \citep{spriteworld19} (see Fig.~\ref{fig_spriteworld}). The environment contains an agent and two sprites. The action space is 8-dimensional and consists of moving the agent up/down or left/right, as well as the same four actions but for dragging objects. It is only possible to drag objects if the agent is standing on them. Furthermore, dragging actions move the agent slower than the standard move actions. This environment makes it easy to define a wide range of diverse natural tasks of different difficulty level that can be easily expressed through language. In particular, here we concentrate on a set of navigation based tasks, which specify a goal location for the agent or the objects. The easiest tasks are specified in terms of the final position of the agent (``top'', ``middle'', ``bottom'' horizontally; and ``left'', ``centre'' or ``right'' vertically). The harder tasks make the locations more specific (by specifying both the horizontal and vertical coordinates, e.g. ``top left'' or ``bottom right''). We also specify equivalent tasks but in terms of the final object position (e.g. ``square at the bottom'' as an easy object task, or ``circle at the top right'' as a hard object task). Note that the object-based tasks are more difficult than the agent-based tasks, because the agent position can be controlled directly in the action space, while controlling the object position requires more elaborate policies that are also dependent on the agent position. We also specify tasks in terms of disjunctions of agent or object goal locations (e.g. easy tasks like ``agent to the left OR square to the top'' or hard tasks like ``agent to the top left OR square to the top right''). Finally, our hardest tasks are specified as conjunctions of specific locations for both objects (e.g. ``square to the top left AND circle to the bottom right''). Note that the goal locations of the downstream tasks may not be directly related to binning specified during the ERL feature control policy learning. Indeed, GPI should be able to generalise the given set of feature control policies to solve a wider range of tasks spanned by the bin boundaries. In the Spriteworld environment the agent receives a reward of 1 if the relevant aspects of the environment state are within their respective goal locations, otherwise the reward is 0. Each episode terminates immediately if the goal is achieved. The agent and the objects are initialised in random positions sampled uniformly within the environment at the start of each episode. We evaluate the performance of our agent and the baselines on 3 tasks sampled from each of the 7 different task classes. Given the structure of our tasks, the average reward corresponds to the average number of episodes on which the agent solves the task.

\begin{figure}[t]
\begin{center}
\centerline{\includegraphics[width=1.\linewidth]{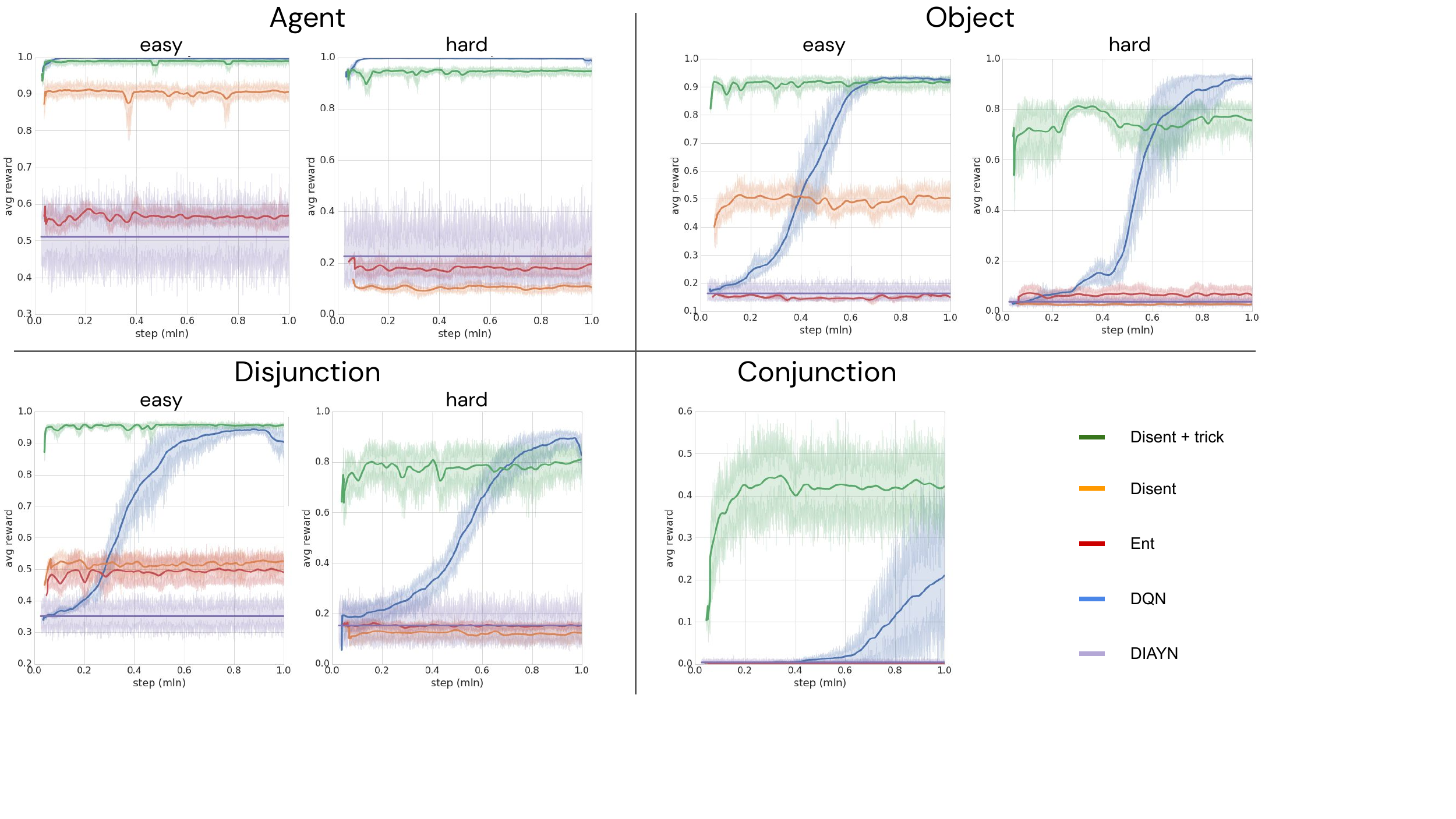}}
\caption{Average reward achieved by the different methods across the same set of tasks. The agent receives a reward of 1 if the goal location is reached, otherwise it receives 0. The rewards are averaged across 3 seeds and 3 tasks per task category. Agent tasks include moving the agent to a specified position. Easy tasks specify a vertical or a horizontal position (e.g. "get the agent to the top"). Hard tasks specify a conjunction of a vertical and a horizontal position (e.g. "get the agent to the bottom right"). Object tasks are similar to agent tasks but specify the goal position of one of the objects (e.g. "get the square to the centre left"). Disjunction tasks are set by specifying a goal in terms of a disjunction of the corresponding easy or hard agent and object tasks (e.g. "get the agent to the left OR get the circle to the middle"). Finally, conjunction tasks are specified as a particular vertical and horizontal position for both objects (e.g. "get the square to the top left AND the circle to the bottom right"). GPI with disentangled features and the off-diagonal trick is able to solve all the tasks at least 50\% of the time almost immediately. DQN takes orders of magnitude more steps to achieve similar performance. DIAYN has discovered some agent control policies that allow it to solve some agent based tasks, but it never discovered how to control objects. We used 9 bin feature discretisation to train the feature control policies used by GPI.}
\label{fig_gt_spriteworld}
\end{center}
\vskip -0.3in
\end{figure}

\begin{figure}[t]
\begin{center}
\centerline{\includegraphics[width=1.\linewidth]{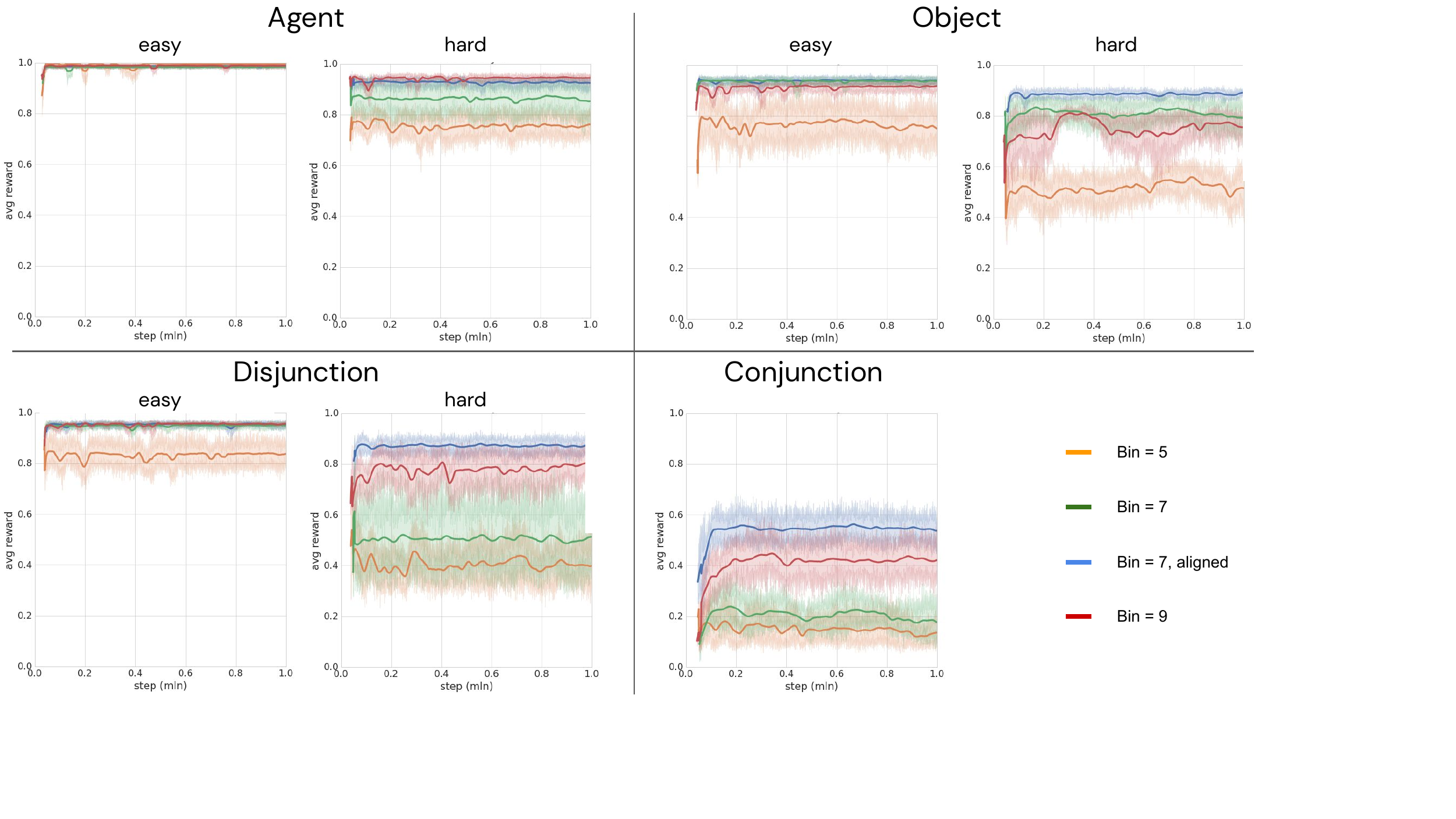}}
\caption{Average reward achieved by GPI with disentangled features and the off-diagonal trick on the same tasks as in Fig.~\ref{fig_gt_spriteworld}, but using different numbers of feature discretisation bins. The performance of our approach decreases with smaller numbers of bins, since GPI has fewer policies to work with, but still significantly outperforms all baselines shown in Fig.~\ref{fig_gt_spriteworld} in terms of data efficiency. Aligning the downstream tasks with the bin discretisation used to train the feature control policies (bin=7, aligned) slightly improves the performance of our approach, especially on harder tasks. }
\label{fig_gt_spriteworld_bins}
\end{center}
\vskip -0.3in
\end{figure}




\paragraph{Off-Diagonal Trick}
As illustrated by Lemma \ref{lemma:off_diagonal_trick} when our features $\phi_{1:n}$ are OIC, the matrix of values $\boldsymbol{\Psi}(s,a)$ takes on a specific form where many off-diagonal entries are completely determined by $\phi_{1:n}(s,a)$ and $\gamma$. By assuming that a disentangled representation has the OIC property we can replace these off-diagonal entries by their exact values, both reducing the cost of computing $\boldsymbol{\Psi}$ and associated error of estimating these off-diagonal terms. We denote the setting of all off-diagonal terms with index $(lh),(ij)$ to $(1 - \gamma)^{-1} \phi_{ij}(s,a)$ as the ``off-diagonal trick.'' 

\paragraph{Results}
We evaluated how well our approach works in the scenario where disentangled features are the true x and y positions of the agent and the objects, and the entangled features are rotations of the disentangled features. Fig.~\ref{fig_gt_spriteworld} demonstrates that GPI over feature control policies provides an almost immediate boost in performance over the DQN baseline. This effect gets more prominent as the task difficulty increases, whereby the number of steps before DQN reaches the same performance as the GPI agent increases with task difficulty. The GPI agent is able to solve the tasks most of the time within around 50k learning steps, while the DQN baseline typically requires $>400$k steps. We also see that GPI over disentangled features provides a significantly bigger jump in performance compared to GPI over entangled features. Finally, it is clear that the ``off-diagonal'' trick works well for the disentangled GPI, which suggests an additional benefit of better computational efficiency during ERL learning. Moreover in our comparisons against DIAYN, we found that the competing method could only learn to perform tasks involving the agent, failing to learn to interact with the objects. Finally, Fig.~\ref{fig_gt_spriteworld_bins} demonstrates that our results are relatively robust over the choice of the disentangled feature bin number ($m=\{5, 7, 9\}$), and whether the task goal regions are set to the ERL bins exactly (aligned in Fig.~\ref{fig_gt_spriteworld_bins}) or not.    

Note that although we have tested our approach on a simple domain using ground truth features, our theoretical results should hold for any complex natural environment and for any natural tasks that can be specified in terms of disentangled transformations and that do not rely on ordered execution. This is because disentangled representation learning aims to find a low-dimensional representation that is equivariant with respect to symmetry transformations, which are a ubiquitous property of our world \citep{livio2012symmetry}. Hence, any arbitrarily complex natural environment should be possible to describe in terms of a relatively small number of disentangled dimensions. We leave empirical demonstrations on more complex environments using learnt disentangled features to future work.

\section{Conclusions}
We have proposed a principled way to learn and recombine a basis set of policies that guarantees achievability for a large set of natural tasks that do not require a particular execution order of actions. We have demonstrated that these policies can be learnt in the ERL setting, where the agent has no access to external rewards and has to learn through intrinsically driven interactions with the environment. We theoretically justified and experimentally verified a three-stage pipeline, where the agent 1) discovers disentangled features, 2) learns a set of basis policies through specifying intrinsic rewards when a particular feature achieves a particular range of values, and 3) uses GPI over these policies to bootstrap reasonable performance on a wide range of natural tasks which can be expressed in language and which do not rely on a particular execution ordering. We have empirically demonstrated that GPI over disentangled feature control policies produces better task coverage and learning efficiency on a simple Spriteworld domain compared to a baseline DQN agent and DIAYN, a state-of-the-art method for discovering diverse policies useful for downstream tasks through task-free interactions with the environment. In the future work we would like to generalise our approach to tasks that require a particular order of action execution, as well as to continuous action spaces. 

\bibliography{iclr2020_conference}
\bibliographystyle{iclr2020_conference}

\newpage
\appendix
\section{Appendix}

\subsection{Spriteworld environment}
The Spriteworld environment consists of a room without obstacles with an agent and two objects: a circle and a square. The agent can take 8 agents consisting of 4 regular movements (up, down, left, right) and 4 ``dragging movements'' which mirror the regular movements but move the agent half as far. Objects in the environment can overlap and pass through each other. When the agent overlaps with an object and executes a dragging movement, both the agent and the object are moved together. For our experiments we use a vectorized version of our environment state as observations to our models, consisting of $6$ scalars representing the the x and y positions of the agent and each object. When the environment is reset, both the agent and objects are randomly positioned. 

\subsection{Learning the Successor Feature Matrix}
In this section we describe the procedure for learning an approximation of our successor feature matrix $\boldsymbol{\Psi}$ whose entries are defined by (\ref{eq_sf_matrix_term}). It is helpful to visualize this matrix as follows:
\begin{equation*}
\boldsymbol{\Psi}(s,a) = \begin{bmatrix}
\boldsymbol{\psi}^{\pi_{11}^*}(s,a) & \boldsymbol{\psi}^{\pi_{12}^*}(s,a) & \cdots & \boldsymbol{\psi}^{\pi_{mk}^*}(s,a)
\end{bmatrix}
\end{equation*}
where each $\boldsymbol{\psi}^{\pi_{ij}^*}$ corresponds to the full vector of successor features evaluated under policy $\pi_{ij}^*$: $\boldsymbol{\psi}^{\pi_{ij}^*}(s,a) = \begin{bmatrix}\psi_{11}^{\pi_{ij}^*}(s,a) \ \   \psi_{12}^{\pi_{ij}^*}(s,a) \ \ \cdots \ \  \psi_{mk}^{\pi_{ij}^*}(s,a) \end{bmatrix}^\top$. Analogously, we also define the column vector of corresponding cumulants as $\boldsymbol{\phi}(s,a) = \begin{bmatrix} \phi_{11}(s,a) \ \ \phi_{12}(s,a) \ \ \cdots \ \ \phi_{mk}(s,a) \end{bmatrix}^\top$.

We learn a neural network: $Q^\theta(s ; g_{ij}) : \mathcal{S} \mapsto \mathbb{R}^{|\mathcal{A}| \times m \times k}$. This network maps an environment observations $s$ to a tensor corresponding to $\boldsymbol{\psi}^{\pi_{ij}^*}(s, \cdot)$\ --- the action-values of the full set of successor features evaluated under $\pi_{ij}^*$. This network is illustrated below: 
\begin{figure}[H]
\centering
\includegraphics[scale=0.3]{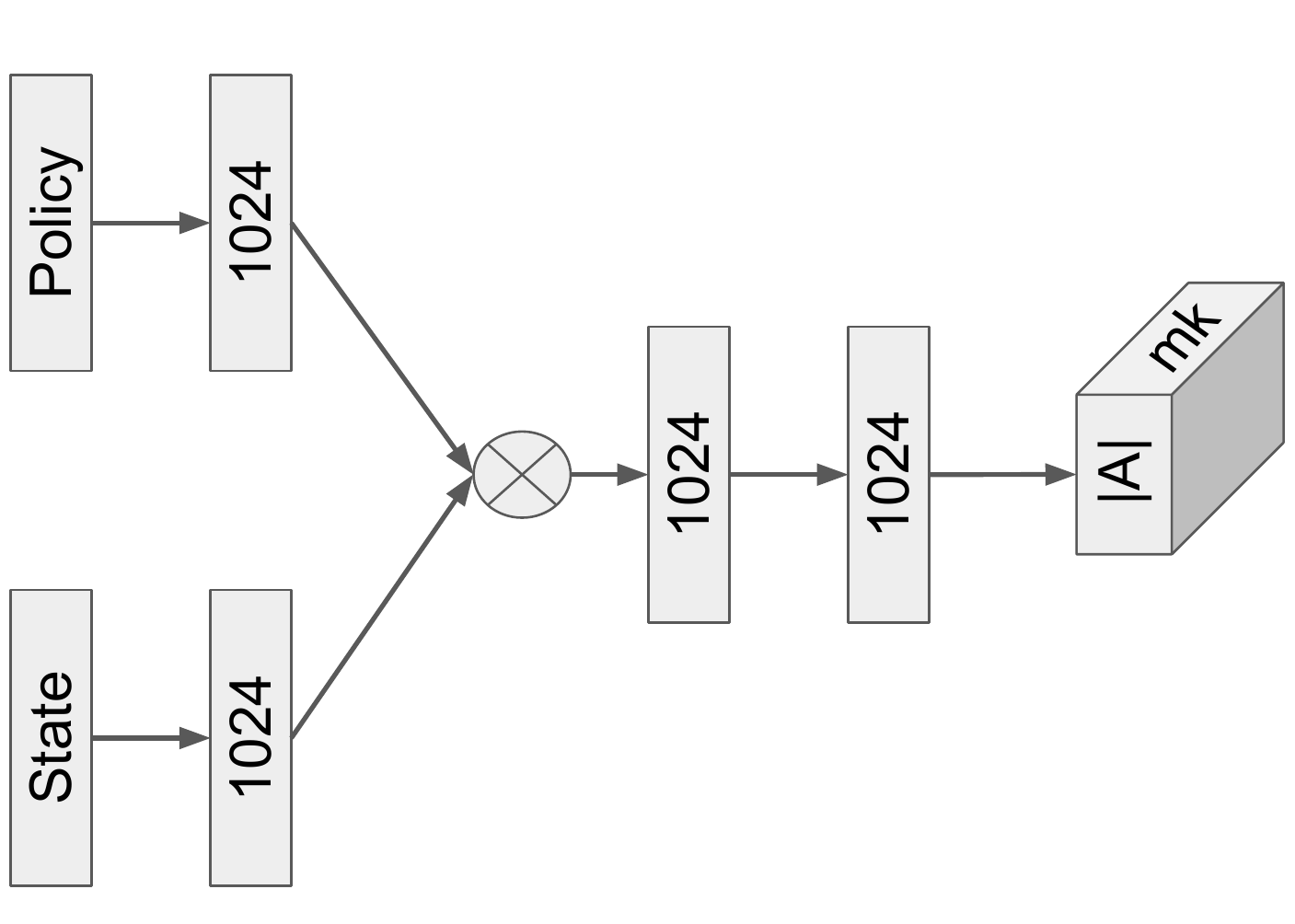}
\end{figure}
where both environment state $s$ and policy specifier $g_{ij}$ are mapped into $1024$ unit latent representations before being multiplied by each other and then followed by two more $1024$ unit layers before a tensor representing $\boldsymbol{\psi}^{\pi_{ij}^*}(s, \cdot)$ is output. All internal representations are preceded by a leaky-relu activation with hyperparameter $\alpha = 0.1$. 

\paragraph{Behavior} To deal with the fact that our successor feature matrix encodes multiple distinct policies, given by $\pi_{ij}(s) = \argmax_a Q^\theta_{ij}(s; g_{ij})$, at the beginning of each environment episode we randomly select a feature / bin pair $(i,j)$ and act according to $\pi_{ij}$. When storing experience tuples in our replay buffer, we include the index $ij$ of the policy that generated it.

\paragraph{Learning} Largely, we train this network in a manner similar to standard DQN \citep{Mnih_etal_2015} along with dueling and double network architectures \citep{Wang_etal_2016,van2016deep} and huber loss for improved stability. We depart from standard architectural choices in how we update each successor feature. For a given experience tuple: $(s, a, s', ij)$ (where $ij$ encodes the policy that generated the transition), we update the entire successor feature vector,  $\boldsymbol{\psi}^{\pi_{ij}^*}(s,a)$ by minimizing:
\begin{equation*}
\left(\boldsymbol{\psi}^{\pi_{ij}^*}(s,a) - [\boldsymbol{\phi}(s,a) + \gamma \boldsymbol{\psi}^{\pi_{ij}^*}(s', \pi_{ij}^*(s'))] \right)^2.
\end{equation*} 

\paragraph{Sample complexity}
On average our agents require around 10~mln steps of interactions with the environment to learn the matrix of successor features. This is then followed by around 50~k steps per downstream task to solve the regression problem specified in Eq.~\ref{eq:regression} that is required to apply GPI. This is in contrast to $>500$~k steps per task required by DQN to reach the same level of performance. Hence, the 10~mln steps invested in the ERL stage are a relatively small cost for the data efficiency gains on $(m+1)^k$ downstream tasks, where $m$ is the number of feature bins, and $k$ is the number of disentangled latent dimensions. Indeed, our GPI framework takes the following number of steps to solve all feasible tasks in an environment: $(1000 + 5 * (m+1)^k)*1e+4$, while the equivalent number for DQN would be $(50 * (m+1)^k)*1e+4$. Hence, the data efficiency of our approach becomes apparent if the number of downstream tasks is at least 23, since $ 200*5 + 5 * (m+1)^k < 5 * (m+1)^k + 5*9*(m+1)^k$ and $(m+1)^k > 22.2$.

\end{document}